\newcommand{\mytilde}{\raise.17ex\hbox{$\scriptstyle\mathtt{\sim}$}}
\newcommand{\xv}{\mathbf{x}}
\newcommand{\Xv}{\mathbf{X}}
\newcommand{\yv}{\mathbf{y}}
\newcommand{\wv}{\mathbf{w}}
\newcommand{\diag}{\mathrm{diag}}
\newcommand{\muv}{\boldsymbol \mu}
\newcommand{\xiv}{\boldsymbol \xi}
\newcommand{\alphav}{\boldsymbol \alpha}
\newcommand{\omegav}{\boldsymbol \omega}
\newcommand{\gammav}{\boldsymbol \gamma}
\newcommand{\Sigmav}{\boldsymbol \Sigma }
\newcommand{\N}{\mathcal{N}}
\newcommand{\data}{\mathcal{D}}
\newcommand{\argmax}{\operatornamewithlimits{argmax}}
\newtheorem{theorem}{Theorem}
\newtheorem{lemma}[theorem]{Lemma}
\newcommand{\myfigure}[2]{
\begin{figure}
\centering
\epsfig{file=#2, trim=0 25 0 55, clip=true, width=1.0\linewidth, height=0.7\linewidth}
\caption{#1}
\label{fig:#2}
\end{figure}
}
\newcommand{\myfigref}[1] {\ref{fig:#1}}
\newcommand{\tabref}[1] {\ref{tab:#1}}
\newcommand{\myitemize}[1]{\begin{itemize}#1\end{itemize}}
\begin{document}

\title{Fast Parallel SVM using Data Augmentation}

\numberofauthors{1}

\author{
\alignauthor
Hugh Perkins*, Minjie Xu*, Jun Zhu, and Bo Zhang \\
\affaddr{Department of Computer Science}\\
\affaddr{Tsinghua University}\\
\affaddr{Beijing, 100084 China} \\
\affaddr{
hughperkins@gmail.com,
chokkyvista06@gmail.com,\\
dcszj@mail.tsinghua.edu.cn,
dcszb@mail.tsinghua.edu.cn}
}



\newcommand{\fix}{\marginpar{FIX}}
\newcommand{\new}{\marginpar{NEW}}

\maketitle

\renewcommand{\thefootnote}{\fnsymbol{footnote}}
\footnotetext{* denotes equal contribution.}

\begin{abstract}
As one of the most popular classifiers, linear SVMs still have challenges in dealing with very large-scale problems, even though linear or sub-linear algorithms have been developed recently on single machines. Parallel computing methods have been developed for learning large-scale SVMs. However, existing methods rely on solving local sub-optimization problems. In this paper, we develop a novel parallel algorithm for learning large-scale linear SVM. Our approach is based on a data augmentation equivalent formulation, which casts the problem of learning SVM as a Bayesian inference problem, for which we can develop very efficient parallel sampling methods. We provide empirical results for this parallel sampling SVM, and provide extensions for SVR, non-linear kernels, and provide a 
parallel implementation of the Crammer and Singer model.  This approach is very promising in its own right, and further is a very useful technique to parallelize a broader family of general maximum-margin models.

\end{abstract}

\section{Introduction}\label{section:introduction}

Support vector machines (SVMs) are among the the most popular and successful paradigms to build classifiers. SVMs have demonstrated tremendous success in many real world applications. However, learning SVMs is a challenging problem. Traditional decomposition methods, like SVMLight~\cite{joachims1999making}, LibSVM~\cite{Chang:libsvm01} and SMO~\cite{Platt:98}, have cubic time complexity. The need for developing highly efficient algorithms has increased, due to the fact that large corpora are very easy to obtain, like the various Challenges on image categorization, object detection, document categorization, etc.

As the computing resources get cheaper, multi-core and multi-machine computing systems are not rare. For instance, it is not uncommon for a research group to build a computing system with hundreds of CPU cores. To harness the power of large clusters of computers, developing the distributed algorithms for SVMs has received a lot of attention. Representative works include the parallel SVM (PSVM)~\cite{Chang:PSVM07}, which performs approximate matrix factorization to reduce memory use and then uses the interior point method to solve the quadratic optimization problem on multiple machines in parallel. The parallel mixture method~\cite{Collobert:02} and the cascade SVM~\cite{Graf:nips04} decompose the entire learning problem into multiple smaller QP problems and solve them in parallel. 

Recently, the frequency of CPU cores has reached a point where increasing the frequency further is
not cost-effective, because of the increase in power consumption.  Modern hardware contains
an increasing number of low-power cores, epitomized by the recent growth of GPGPU hardware.  Thus, on future hardware, the fastest algorithm might not be the one that runs fastest in a single thread, but the one which can run effectively on parallel hardware.

In this paper, we present a very simple and highly efficient distributed algorithm for learning SVMs. Our algorithm is built on the recent work~\cite{Polson:BA11}, which shows that the learning problems of SVM can be equivalently formulated as hierarchical Bayesian model, with additional scale variables. Based on the hierarchical formulation, we can develop Monte Carlo methods to infer the parameters (or their posterior distributions). More importantly, the sampling algorithm can be easily parallelized.

Our work is also inspired by the recent developments on distributed Monte Carlo methods for improving the scalability of probabilistic latent topic models~\cite{Smola:vldb10,newman2009distributed}.

Our parallel method is interesting in its own right, because it can be massively and scalably parallelized. In our experiments, we showed scalability up to 500 cores for large datasets.  Not only does parallelizing allow one to take advantage of the distributed processing in commodity clusters, but also the large amount of distributed memory, so it is possible to run on huge datasets which is otherwise even impossible to be loaded into memory on single machines.

In addition, it is a useful addition to our armory, because it can be used to solve composite models, such as MedLDA~\cite{Zhu:icml13}, without needing to make the mean-field assumption.  There are many models, such as \cite{xu:nips12} for example, that may be able to benefit from fast and accurate parallelization using the parallel sampling or parallel EM SVM formulation.

We have extended Polson's formulations to provide formulations in addition for support vector regression (SVR), non-linear kernelized SVM, and the Crammer and Singer multiclass model.

We provide parallel implementations for a linear SVM, a non-linear kernelized SVM, a formulation for SVR, and a parallel solver for the
Crammer and Singer multiclass SVM model.

\paragraph{Outline} Section \ref{section:RTM} reviews the formulation of an SVM as a Bayesian inference problem, Section \ref{section:extensions} extends
the linear sampling SVM to non-linear kernels, to regression, and to the Crammer and Singer multiclass model.  Section \ref{section:parallelsvm} presents
the use of the sampling SVM to implement a parallel, distributed SVM.  Lastly, Section \ref{section:experiments} presents experiments comparing our
parallel SVM implementation with recent state of the art SVM solvers.

We show that the parallel linear SVM can give excellent performance on very large datasets, where the number of samples is large in comparison to the square of the number of features, and there is parallel hardware available.  In these cases, we show that our implementation can give training times faster than other state of the art linear solvers, such as StreamSVM, whilst giving comparable accuracy.

\section{SVM as Bayesian Inference}\label{section:RTM}

In this section we present the fundamental theories on which our extensions and distributed algorithms are built.

\subsection{SVM: the Basics}

We first focus on standard linear SVMs for binary classification. Let $\data = \{(\xv_d, y_d)\}_{d=1}^D$ be the training data, where $y_d \in \{1, -1\}$. The goal of SVMs is to learn a linear discriminant function
    $$f(\xv; \wv, \nu) = \wv^\top \xv + \nu.$$
For notation simplicity, we absorb the offset parameter $\nu$ into $\wv$ by introducing an additional feature dimension with fixed unit value. To find the optimal $\wv$, the canonical learning problem of SVMs with a tolerance on training errors is formulated as a constrained optimization problem
\setlength\arraycolsep{1pt}
\begin{eqnarray}
    \min_{\wv, \xiv} && \frac{1}{2} \lambda \Vert \wv \Vert_2^2 + 2 \sum_d \xi_d \nonumber \\
    \mathrm{s.t.}:&&\forall d,~\left\{
    \begin{array}{l}
        y_d \wv^\top \xv_d \geq 1 - \xi_d \\
        \xi_d \geq 0
    \end{array}\right. ,  \nonumber
\end{eqnarray}
Note that the constant factor $2$ in the training error term can be absorbed into $\lambda$, yet we leave it for the simplicity of the deduction later.
Slack variables removed, the problem is equivalently formulated as an unconstrained form
\begin{eqnarray}\label{eq:SVM}
    \min_{\wv} \frac{1}{2} \lambda \Vert \wv \Vert_2^2 + 2 \sum_d \max(0, 1- y_d \wv^\top \xv_d ),
\end{eqnarray}
which is known as the regularized risk minimization framework. For binary classification, the loss is called hinge loss.

\subsection{SVM: the MAP estimate}

Problem~\eqref{eq:SVM} can also be viewed as a MAP estimate of a probabilistic model, where the posterior distribution is
$$p(\wv | \data) \propto q_0(\wv) q(\yv | \wv, \Xv),$$
where $q_0(\wv) = \mathcal{N}(\mathbf{0}, \lambda^{-1} I)$ and $q(\yv | \wv, \Xv) = \prod_d q(y_d | \wv, \xv_d)$ with
\begin{eqnarray}
    q(y_d | \wv, \xv_d) = \exp( - 2\max(0, 1-y_d \wv^\top \xv_d) ).
\end{eqnarray}
Note that we factorize the posterior into $q_0$ and $q$ merely for the simplicity of subsequent denotation and they normally are intrinsically different from the \emph{genuine} prior and likelihood as can be induced from the probabilistic model (even up to a constant factor). Hence we call $q_0$ and $q$ \emph{pseudo-}prior and \emph{pseudo-}likelihood respectively.

The benefit of the MAP formulation is that it allows us to take advantage of many existing techniques developed for inference in probabilistic models and hence grants more flexibility for the solution. Specifically, Polson and Scott~\cite{Polson:BA11} show that the pseudo-likelihood can be represented as a scale mixture of Gaussians, namely
\begin{lemma}Scale mixture for hinge loss
    \begin{align}
        & \exp( - 2\max(0, 1-y_d \wv^\top \xv_d) ) \nonumber \\
        = & \int_0^\infty \frac{1}{ \sqrt{ 2\pi \gamma_d} } \exp\Big( -\frac{ (1 + \gamma_d - y_d \wv^\top \xv_d)^2 }{2 \gamma_d}  \Big) d\gamma_d
    \end{align}
\end{lemma}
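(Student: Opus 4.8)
The plan is to strip off all dependence on $\wv$ by a single affine change of variable, reducing the statement to a purely scalar integral identity, and then to evaluate that integral by the classical Gaussian ``master'' substitution. First I would set $a \defEq 1 - y_d\wv^\top\xv_d$ (and suppress the data index on $\gamma$): the left-hand side is then exactly $\exp(-2\max(0,a))$, and since $1+\gamma_d - y_d\wv^\top\xv_d = a + \gamma$, the right-hand side becomes $\int_0^\infty (2\pi\gamma)^{-1/2}\exp\big(-(a+\gamma)^2/(2\gamma)\big)\,\ud\gamma$. Hence it suffices to prove, for every real $a$,
\[
    \exp\big(-2\max(0,a)\big) \;=\; \int_0^\infty \frac{1}{\sqrt{2\pi\gamma}}\,\exp\!\Big(-\frac{(a+\gamma)^2}{2\gamma}\Big)\,\ud\gamma .
\]

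Next I would expand $(a+\gamma)^2/(2\gamma) = a^2/(2\gamma) + a + \gamma/2$ and pull the $\gamma$-independent factor $e^{-a}$ outside the integral (the integrand is nonnegative and its exponent tends to $-\infty$ both as $\gamma\to 0^+$ and as $\gamma\to\infty$, so the integral is finite and the manipulation is legitimate), leaving
\[
    e^{-a}\,\frac{1}{\sqrt{2\pi}}\int_0^\infty \gamma^{-1/2}\exp\!\Big(-\frac{a^2}{2\gamma}-\frac{\gamma}{2}\Big)\,\ud\gamma .
\]
The remaining integral is a special case of the classical formula $\int_0^\infty \gamma^{-1/2}\exp(-p/\gamma - q\gamma)\,\ud\gamma = \sqrt{\pi/q}\,e^{-2\sqrt{pq}}$, valid for $p\ge 0$, $q>0$; taking $p = a^2/2$, $q = 1/2$ gives $\sqrt{2\pi}\,e^{-|a|}$, so the whole expression collapses to $e^{-a}e^{-|a|} = e^{-(a+|a|)}$. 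Since $a + |a| = 2\max(0,a)$, this matches the left-hand side, and reverting $a = 1 - y_d\wv^\top\xv_d$ finishes the proof.

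The one genuinely nontrivial ingredient is the scalar identity $\int_0^\infty \gamma^{-1/2}e^{-p/\gamma - q\gamma}\,\ud\gamma = \sqrt{\pi/q}\,e^{-2\sqrt{pq}}$; everything else is algebra, so I expect this to be the main obstacle. I would dispatch it as follows: substitute $\gamma = t^2$ to rewrite the integral as $2J$ with $J = \int_0^\infty e^{-qt^2 - p/t^2}\,\ud t$. The substitution $t\mapsto \sqrt{p/q}\,/t$ shows that the two terms of $\int_0^\infty(\sqrt q + \sqrt p\,t^{-2})\,e^{-qt^2 - p/t^2}\,\ud t$ each equal $\sqrt q\,J$, so this integral is $2\sqrt q\,J$; on the other hand, the substitution $u = \sqrt q\,t - \sqrt p/t$ (under which $qt^2 + p/t^2 = u^2 + 2\sqrt{pq}$, $\ud u = (\sqrt q + \sqrt p\,t^{-2})\,\ud t$, and $u$ sweeps all of $\mathbb{R}$) turns the same integral into $e^{-2\sqrt{pq}}\int_{-\infty}^\infty e^{-u^2}\,\ud u = \sqrt\pi\,e^{-2\sqrt{pq}}$. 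Equating the two evaluations gives $J = \tfrac{1}{2}\sqrt{\pi/q}\,e^{-2\sqrt{pq}}$, as required. (Equivalently, one could differentiate under the integral sign to show that, as a function of $p$, the integral satisfies a first-order linear ODE, and integrate from the Gaussian boundary value at $p=0$.) The degenerate case $a=0$ is then covered by continuity, and the measurability and integrability needed to regard the identity as a bona fide scale mixture follow from the same exponent estimate.
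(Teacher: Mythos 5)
Your proof is correct, and every step checks out: the reduction to the scalar identity $e^{-2\max(0,a)}=\int_0^\infty(2\pi\gamma)^{-1/2}e^{-(a+\gamma)^2/(2\gamma)}\,\ud\gamma$ via $a=1-y_d\wv^\top\xv_d$, the expansion $(a+\gamma)^2/(2\gamma)=a^2/(2\gamma)+a+\gamma/2$, the evaluation $\int_0^\infty\gamma^{-1/2}e^{-p/\gamma-q\gamma}\,\ud\gamma=\sqrt{\pi/q}\,e^{-2\sqrt{pq}}$ at $p=a^2/2$, $q=1/2$, and the final collapse $e^{-a}e^{-|a|}=e^{-2\max(0,a)}$ are all sound, as is your Cauchy--Schl\"omilch-style derivation of the master integral. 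Note, however, that the paper offers no proof of this lemma at all: it is stated as a result of Polson and Scott, with the citation standing in for the argument (the later Lemma~3 is then proved by splitting the $\epsilon$-insensitive loss into two hinge terms and invoking "a similar derivation as in Lemma 1"). So there is nothing in the paper to diverge from; your argument is essentially the standard one behind the cited result, namely that $e^{-|a|}$ is an exponential scale mixture of Gaussians (equivalently, the Laplace density as a normal variance mixture, or the normalization of the inverse Gaussian density), dressed up with the extra tilt $e^{-a}$ that converts $|a|$ into $2\max(0,a)$. What your write-up buys over the paper is a self-contained, elementary verification that also makes visible exactly where positivity of $\gamma$ and the boundary case $a=0$ enter; the only (minor) thing I would add is an explicit remark that the integrand, viewed as a function of $\gamma$, is the density of $\mathcal{IG}(|a|^{-1},1)$ up to the factor $e^{-2\max(0,a)}$, since that is precisely what licenses the Gibbs step in Eq.~(\ref{eq:SVMgamma2}).
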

This directly inspires an augmented representation with $\gammav=(\gamma_1,\dots,\gamma_D)$ such that
\begin{align*}
    p(\wv,\gammav|\data) &\propto q_0(\wv)\prod_d q(y_d,\gamma_d|\wv,\xv_d)\\
    q(y_d|\wv,\xv_d)&=\int_0^\infty q(y_d,\gamma_d | \wv, \xv_d)d\gamma_d\\
    q(y_d,\gamma_d|\wv,\xv_d)&=\phi(1-y_d \wv^\top \xv_d|-\gamma_d,\gamma_d)
\end{align*}
where $\phi(\cdot|\mu,\sigma^2)$ is the Gaussian density function.

\subsection{MCMC Sampling for SVM}
Based on this augmented representation, we are able to design MCMC methods for $p(\wv,\gammav|\data)$, from which the optimal SVM solution that maximizes $p(\wv|\data)$ is relatively more probable to get sampled. 

Specifically, we use Gibbs sampling and have the following conditional distributions~\cite{Polson:BA11}
\begin{eqnarray}
    p(\wv | \gammav, \data) &=& \mathcal{N}(\muv, \Sigmav) \label{eq:SVMgamma} \\
    p(\gamma_d^{-1} | \wv, y_d,\xv_d) &=& \mathcal{IG}(|1 - y_d \wv^\top \xv_d|^{-1}, 1 )  \label{eq:SVMgamma2},
\end{eqnarray}
where
\begin{equation}\label{mu_sigma}
    \Sigmav = \Big(\lambda I + \sum_d \frac{1}{\gamma_d} \xv_d \xv_d^\top \Big)^{-1},~ \muv = \Sigmav \Big( \sum_d y_d(1 + \frac{1}{\gamma_d}) \xv_d \Big)
\end{equation}
and $\mathcal{IG}$ is the inverse Gaussian distribution.

\subsection{EM algorithm for SVM}
The EM algorithm is useful when directly maximizing the posterior $p(\wv|\data)$ is intractable but it's easy to alternate between the following two steps which converges to a local maximum of the posterior.
\begin{align}
    \text{E-step: }&Q^{(m)}(\wv)&=&\int\log p(\wv,\gammav|\data)p(\gammav|\data,\wv^{(m)})d\gammav\\
    \text{M-step: }&\wv^{(m+1)}&=&\argmax_{\wv}Q^{(m)}(\wv)
\end{align}
One can prove that the algorithm above monotonically increases the genuine posterior distribution of interest $p(\wv|\data)$ after each iteration, just as traditional EM does likelihood.

Deduction details omitted to save space, we summarize the results as follows
\begin{align}
    \text{E-step (update $\gammav$): }&&\gamma_d^{(m)}&=|1-y_d\wv^{(m)\top}\xv_d|\\
    \text{M-step (update $\wv$): }&&\wv^{(m+1)}&=\muv^{(m+1)}(\gammav^{(m)})
\end{align}
where $\muv$ is calculated just as Eq.~\eqref{mu_sigma}.


Although normally EM is not guaranteed to obtain the global optimum (even after infinite iterations), for our specific $p(\wv|\data)$ which is concave w.r.t $\wv$, global optimum is expected. Furthermore, EM is a deterministic algorithm and enjoys a straightforward stopping criterion when compared with MCMC sampling.

\section{Extensions}\label{section:extensions}

In this section we extend the idea above to SVR, nonlinear kernel SVMs, and the Crammer and Singer multi-class SVM.

\subsection{Learning Nonlinear Kernel SVMs}

According to the representer theorem, the solution to problem~(\ref{eq:SVM}) has the form
\begin{eqnarray}
\wv = \sum_d \alpha_d y_d \xv_d,
\end{eqnarray}
which is a linear combination of $\Xv$. We can naturally extend it to the nonlinear case by using a feature mapping function $h$ and learn the nonlinear SVM by solving
 \begin{eqnarray}\label{eq:KernelSVM}
    \min_{\wv} \frac{1}{2} \lambda \Vert \wv \Vert_2^2 + 2 \sum_d \max(0, 1- y_d \wv^\top h(\xv_d) ),
\end{eqnarray}
whose solution can be represented accordingly as
\begin{eqnarray}\label{eq:repnonlinear}
    \wv = \sum_d \alpha_d y_d h(\xv_d) = H \diag(\yv) \alphav,
\end{eqnarray}
where $H = [h(\xv_1)~h(\xv_2)~\cdots~h(\xv_D)]$.

Substituting Eq.~\eqref{eq:repnonlinear} into~\eqref{eq:KernelSVM}, we get the dual problem
\begin{align}\label{eq:DualKernelSVM}
    \min_{\alphav}~&\frac{1}{2} \lambda \alphav^\top \diag(\yv) K \diag(\yv) \alphav ~+\nonumber \\
    & 2 \sum_d \max(0, 1- y_d \alphav^\top \diag(\yv) K_d^\top ),
\end{align}
where $K$ is the Gram matrix and $K_d$ is the $d$th row. If the feature map function $h$ is a reproducing kernel, i.e., $h(\xv) = k(\cdot, \xv)$, problem~\eqref{eq:DualKernelSVM} becomes a kernel SVM and each entry of $K$ is a dot product, that is $$K_{ij} = k(\xv_i, \xv_j) = h(\xv_i)^\top h(\xv_j).$$
The Gram matrix $K$ is positive definite for any reproducing kernel, e.g. the most commonly used Gaussian kernel
$$k(\xv_i, \xv_j) = \exp\Big( - \frac{\Vert \xv_i - \xv_j \Vert_2^2}{2 \sigma^2} \Big)$$

Let $\omegav =  \diag(\yv) \alphav$, then $\wv = \sum_d \omega_d h(\xv_d)$ and the problem becomes
 \begin{eqnarray}\label{eq:DualKernelSVM2}
    \min_{\omegav} \frac{1}{2} \lambda \omegav^\top K \omegav + 2 \sum_d \max(0, 1- y_d \omegav^\top K_d^\top ),
\end{eqnarray}


Observing the similarity between problem~(\ref{eq:DualKernelSVM2}) and (\ref{eq:SVM}), we reformulate it as MAP just as we did (\ref{eq:SVM}), with $q_0(\omegav)=\mathcal{N}(0, (\lambda K)^{-1} )$ and $q(\yv | \omegav, \Xv) = \prod_d q(y_d | \omegav, \xv_d)$, where
\begin{equation}
    q(y_d | \omegav, \xv_d)=\exp( - 2\max(0, 1-y_d \omegav^\top K_d^\top) ).
\end{equation}

\begin{lemma}Scale mixture for kernel hinge loss
    \begin{align}
        & \exp( - 2\max(0, 1-y_d \omegav^\top K_d^\top) ) \nonumber \\
        = & \int_0^\infty \frac{1}{ \sqrt{ 2\pi \gamma_d} } \exp\Big( -\frac{ (1 + \gamma_d - y_d \omegav^\top K_d^\top)^2 }{2 \gamma_d}  \Big) d\gamma_d
    \end{align}
\end{lemma}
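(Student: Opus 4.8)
The plan is to observe that this lemma is not a new computation but a direct re-instantiation of the ``Scale mixture for hinge loss'' lemma stated above. Inspecting that earlier lemma, the quantity $y_d\wv^\top\xv_d$ enters only as a free real scalar, and nothing in Polson and Scott's derivation uses the fact that it is a linear form in $\wv$. Concretely, what that lemma establishes is the scalar identity
\begin{equation*}
    \exp\bigl(-2\max(0,1-z)\bigr)=\int_0^\infty\frac{1}{\sqrt{2\pi\gamma}}\exp\Bigl(-\frac{(1+\gamma-z)^2}{2\gamma}\Bigr)\,d\gamma,\qquad z\in\mathbb{R}.
\end{equation*}
So the first and essentially only step is to substitute $z=y_d\omegav^\top K_d^\top$ into this identity. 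Since $y_d\omegav^\top K_d^\top$ is a real number for every fixed $\omegav$, $y_d$ and $K_d$, the claimed equality follows at once; the integration variable $\gamma_d$ plays exactly the same role as before.

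For completeness I would also recall the short self-contained derivation of the scalar identity, since the kernel case then needs nothing beyond it. Writing $b=1-z$ and expanding $(1+\gamma-z)^2=(b+\gamma)^2=b^2+2b\gamma+\gamma^2$, the exponent splits as $\frac{(b+\gamma)^2}{2\gamma}=\frac{b^2}{2\gamma}+b+\frac{\gamma}{2}$, so the integrand factors as $e^{-b}\cdot\frac{1}{\sqrt{2\pi\gamma}}\exp\bigl(-\frac{b^2}{2\gamma}-\frac{\gamma}{2}\bigr)$. The remaining integral is the standard one
\begin{equation*}
    \int_0^\infty\frac{1}{\sqrt{2\pi\gamma}}\exp\Bigl(-\frac{b^2}{2\gamma}-\frac{\gamma}{2}\Bigr)\,d\gamma=e^{-|b|},
\end{equation*}
which one can verify by setting $I(b)$ equal to the left-hand side, checking $I(0)=1$ directly (substitute $\gamma\mapsto u^2$) and $I'(b)=-I(b)$ via the substitution $\gamma\mapsto b^2/\gamma$ inside the derivative, so that $I(b)=e^{-|b|}$. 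Multiplying the two pieces gives $e^{-b}e^{-|b|}=e^{-(b+|b|)}=e^{-2\max(0,b)}=\exp(-2\max(0,1-z))$, which is the required identity; specializing $z=y_d\omegav^\top K_d^\top$ finishes the proof.

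There is essentially no obstacle here: the lemma is a corollary of the already-quoted result, and the only nontrivial analytic ingredient is the one-dimensional Gaussian-type integral above, which is itself already packaged in that result. The single point worth stating carefully is why the substitution is legitimate, namely that the scale-mixture identity depends on the margin term only through its numerical value as a real scalar, so replacing the linear predictor $\wv^\top\xv_d$ by the kernelized predictor $\omegav^\top K_d^\top$ leaves the argument untouched.
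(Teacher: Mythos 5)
Your proof is correct and matches the paper's (implicit) argument exactly: the paper offers no separate proof of this lemma, treating it as Lemma 1 applied verbatim with the margin $y_d\wv^\top\xv_d$ replaced by the scalar $y_d\omegav^\top K_d^\top$, which is precisely your main step. Your additional self-contained verification of the underlying integral identity $\int_0^\infty\frac{1}{\sqrt{2\pi\gamma}}\exp\bigl(-\frac{b^2}{2\gamma}-\frac{\gamma}{2}\bigr)\,d\gamma=e^{-|b|}$ is sound and goes beyond what the paper records (it cites Polson and Scott for that), but it does not change the route.
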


Consequently for kernel SVMs, we have
    \begin{eqnarray}
        q(\omegav | \gammav, \data) &=& \mathcal{N}(\muv, \Sigmav) \\
        p(\gamma_d^{-1} | \wv, y_d,\Xv) &=& \mathcal{IG}(|\ell - y_d \omegav^\top K_d^\top|^{-1}, 1 ),
    \end{eqnarray}
    where $$\Sigmav = \Big(\lambda K + \sum_d \frac{1}{\gamma_d} K_d^\top K_d \Big)^{-1},~ \muv = \Sigmav \Big( \sum_d y_d(1 + \frac{1}{\gamma_d}) K_d^\top \Big).$$

\subsection{Support Vector Regression}

For regression, where the response variable $y$ are real-valued, the support vector regression (SVR) problem is defined as minimizing a regularized $\epsilon$-insensitive loss~\cite{Smola:03}
\begin{eqnarray}\label{eq:SVR}
\min_{\wv} \frac{1}{2} \lambda \Vert \wv \Vert_2^2 + 2 \sum_d \max(0, |y_d - \wv^\top \xv_d| - \epsilon ),
\end{eqnarray}
where $\epsilon$ is the precision parameter\footnote{$\epsilon$ is a small positive number, e.g., $1e^{-3}$ in our experiments}.

Naturally, we obtain the same $q_0$ as SVMs and
\begin{eqnarray}
    q(y_d | \wv, \xv_d) = \exp( - 2\max(0, |y_d - \wv^\top \xv_d| - \epsilon) ),
\end{eqnarray}
and the augmentation is carried out by the following lemma
\begin{lemma}Double scale mixture for $\epsilon$-insensitive loss
    \begin{align}
        & \exp( - 2\max(0, |y_d - \wv^\top \xv_d|-\epsilon) ) \nonumber \\
        = &\int_0^\infty \frac{1}{ \sqrt{ 2\pi \gamma_d} } \exp\Big( -\frac{ (\gamma_d + y_d - \wv^\top \xv_d - \epsilon)^2 }{2 \gamma_d}  \Big) d\gamma_d \nonumber \\
        \times &\int_0^\infty \frac{1}{ \sqrt{ 2\pi \omega_d} } \exp\Big( -\frac{ (\omega_d - y_d + \wv^\top \xv_d - \epsilon)^2 }{2 \omega_d}  \Big) d\omega_d
    \end{align}
\end{lemma}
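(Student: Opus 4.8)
The plan is to reduce this ``double'' scale mixture to two applications of the hinge-loss scale-mixture identity (Lemma~1), after first rewriting the $\epsilon$-insensitive loss as a sum of two one-sided hinge-type losses. Writing $a = y_d - \wv^\top\xv_d$, the first step is the pointwise identity, valid for every $\epsilon \ge 0$,
\[
  \max(0,\,|a|-\epsilon)\;=\;\max(0,\,a-\epsilon)\;+\;\max(0,\,-a-\epsilon),
\]
which I would verify by a three-way case split on $a$: if $a \ge \epsilon$ only the first right-hand term is active and equals $a-\epsilon=|a|-\epsilon$; if $a \le -\epsilon$ only the second is active and equals $-a-\epsilon=|a|-\epsilon$; and if $-\epsilon \le a \le \epsilon$ all three quantities vanish. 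It is precisely the hypothesis $\epsilon \ge 0$ — which holds here since $\epsilon$ is a small positive precision parameter — that guarantees the two terms on the right are never simultaneously positive, so no double counting occurs.

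Exponentiating this identity and using $\exp(-2(u+v))=\exp(-2u)\exp(-2v)$ then splits the left-hand side of the lemma into the product
\[
  \exp\!\big(-2\max(0,(y_d-\wv^\top\xv_d)-\epsilon)\big)\;\cdot\;\exp\!\big(-2\max(0,(\wv^\top\xv_d-y_d)-\epsilon)\big).
\]
To each factor I would apply Lemma~1 in the form in which it is actually proved, namely $\exp(-2\max(0,t))=\int_0^\infty (2\pi\gamma)^{-1/2}\exp\!\big(-(\gamma+t)^2/(2\gamma)\big)\,d\gamma$ for all real $t$ (the version stated there is the case $t=1-y_d\wv^\top\xv_d$). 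Taking $t = y_d-\wv^\top\xv_d-\epsilon$ in the first factor yields exactly the $\gamma_d$-integral of the statement, and taking $t = \wv^\top\xv_d - y_d - \epsilon = -y_d+\wv^\top\xv_d-\epsilon$ in the second yields the $\omega_d$-integral; multiplying the two representations gives the claim.

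I expect the only real content to be the pointwise decomposition of the $\epsilon$-insensitive loss — in particular, checking that the two one-sided losses together cover precisely the ``outer'' regions $|a|>\epsilon$ without overlap — while the remaining steps are immediate once Lemma~1 is in hand. If a self-contained argument is wanted, the base case of Lemma~1 itself follows from the elementary Gaussian integral $\int_0^\infty (2\pi\gamma)^{-1/2}\exp(-\gamma/2 - t^2/(2\gamma))\,d\gamma = e^{-|t|}$ combined with the expansion $(\gamma+t)^2/(2\gamma) = \gamma/2 + t + t^2/(2\gamma)$, which gives $e^{-t-|t|}=e^{-2\max(0,t)}$.
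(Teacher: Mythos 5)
Your proof is correct and follows essentially the same route as the paper: decompose the $\epsilon$-insensitive loss as the sum of two one-sided hinge terms (using $\epsilon\ge 0$ to rule out overlap), exponentiate to obtain a product, and apply the scale-mixture identity of Lemma~1 to each factor. You additionally spell out the case split and the underlying Gaussian integral $\int_0^\infty (2\pi\gamma)^{-1/2}\exp(-\gamma/2 - t^2/(2\gamma))\,d\gamma = e^{-|t|}$, which the paper leaves implicit.
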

\begin{proof}
    As $\epsilon \geq 0$, the following equality holds
    {\small\begin{align}
        & \max(0, |y_d - \wv^\top \xv_d|-\epsilon) \nonumber \\
        = &\max(0, y_d - \wv^\top \xv_d - \epsilon) + \max(0, -y_d + \wv^\top \xv_d -\epsilon).
    \end{align}}%
    Therefore, for each term, we can do similar derivation as in Lemma 1 to get the double scale mixture formulation.
\end{proof}
Consequently for SVR, we have
\begin{eqnarray}
    p(\wv | \gammav, \omegav, \data) &=& \mathcal{N}(\muv, \Sigma) \\
    p(\gamma_d^{-1} | \wv, \omegav, y_d, \xv_d) &=& \mathcal{IG}(|y_d - \wv^\top \xv_d - \epsilon|^{-1}, 1 ) \label{eq:SVRgamma} \\
    p(\omega_d^{-1} | \wv, \gammav, y_d, \xv_d) &=& \mathcal{IG}(|y_d - \wv^\top \xv_d + \epsilon|^{-1}, 1 ), \label{eq:SVRomega}
\end{eqnarray}
where the covariance and mean are now
\begin{equation}
    \Sigmav = \Big( \lambda I + \sum_d(\frac{1}{\gamma_d} + \frac{1}{\omega_d}) \xv_d \xv_d^\top \Big)^{-1},
\end{equation}
\begin{equation}
    \muv = \Sigmav \Big( \sum_d( \frac{y_d - \epsilon}{\lambda_d} + \frac{y_d + \epsilon}{\omega_d}) \xv_d \Big).
\end{equation}

\subsection{Learning Multi-class SVM}

For multi-class classification, we have $y_d \in \{1, \cdots, M\}$. There are various strategies to perform multi-class classification with SVM. Here we consider the approach proposed by Crammer and Singer (2001), where the generalized discriminant function is defined to be
\begin{eqnarray}
    f(y, \xv; \wv) = \wv_y^\top \xv
\end{eqnarray}
where $\wv_y$ is the sub-vector corresponding to class label $y$. And the regularized risk minimization problem becomes
\begin{align}
    &\min_\wv \frac{1}{2}\lambda \Vert \wv \Vert_2^2 + 2 \sum_d \max_y(\Delta_d(y) - \Delta f_d(y;\wv) ),
\end{align}
where $\Delta_d(y)$ is the cost of predicting $y$ for the true label $y_d$ and $\Delta f_d(y;\wv) = f(y_d, \xv_d; \wv) - f(y, \xv_d; \wv)$ is the margin, and both $\Delta_d(y)$ and $\Delta f_d(y;\wv)$ equals zero when $y=y_d$.

Then, the pseudo-prior and pseudo-likelihood is changed accordingly to
{\small
\begin{align}
    q_0(\wv)&=\prod_y q_0(\wv_y)=\prod_y\N(\wv_y|\mathbf{0},\lambda^{-1}I)\\
    q(y_d|\wv,\xv_d)&=\exp(-2\max_y(\Delta_d(y) + \wv_y^\top \xv_d - \wv_{y_d}^\top \xv_d))
\end{align}
}%
In order for Lemma 1 to be applicable, we resort to an iterative procedure, which alternately infer weights $\wv_y$ given the other weights $\wv_{-y}$, for each class label $y$.

The local conditional distribution is
\begin{eqnarray}
    p(\wv_y | \data, \wv_{-y}) \propto q_0(\wv_y) \prod_d \psi(\wv_y;\wv_{-y},y_d,\xv_d),
\end{eqnarray}
where $\psi(\wv_y;\wv_{-y},y_d,\xv_d)\propto q(y_d|\wv,\xv_d)$
\begin{align}
&=\exp(-2(\max(\wv_y^\top\xv_d+\Delta_d(y),\zeta_d(y))-\wv_{y_d}^\top\xv_d))\quad\; \nonumber \\
&\propto\left\{\begin{array}{l}
        \exp(-2\max(\wv_y^\top\xv_d-\rho_d^y,0))\quad(y\neq y_d) \\
        \exp(-2\max(0,\rho_d^y-\wv_y^\top\xv_d))\quad(y=y_d) \\
    \end{array}\right.\\
&=\exp(-2\max(0,\beta_d^y(\rho^y_d-\wv^T_y \xv_d)))
\end{align}
where $\zeta_d(y) = \max_{y^\prime\neq y}(\wv_{y^\prime}^\top \xv_d+\Delta_d(y^\prime))$ is independent of $\wv_y$, $\rho_d^y=\zeta_d(y)-\Delta_d(y)$ and
$\beta_d^y = \begin{cases}+1 & \text{ for } y=y_d \\
                      -1 & \text{ for } y\neq y_d
\end{cases}$.

Hence we take
\[\psi(\wv_y;\wv_{-y},y_d,\xv_d)=\exp(-2\max(0,\beta_d^y(\rho^y_d-\wv^T_y \xv_d)))\]
and through a similar augmentation, we obtain the Gibbs sampling step for each augmented local conditional distribution $p(\wv_y,\gammav_y|\data,\wv_{-y})$
\begin{align}
        p(\gamma_{yd}^{-1} | \wv, y_d, \xv_d) &= \mathcal{IG}(|\rho^y_d - \wv_y^\top \xv_d|^{-1}, 1 ), \\
        p(\wv_y | \gammav_{y}, \wv_{-y}, \data) &= \mathcal{N}(\muv_y, \Sigmav_y) \label{eq:MC-SVMgamma}
\end{align}
where \begin{align} \Sigmav_y &= \Big(\lambda I + \sum_d \frac{1}{\gamma_{yd}} \xv_d \xv_d^\top \Big)^{-1}, \\
 \muv_y &= \Sigmav_y \Big( \sum_d ( \frac{\rho^y_{d}}{\gamma_{yd}} + \beta_d^y) \xv_d \Big). \end{align}
Note that this is actually a hierarchical Gibbs sampling
\begin{enumerate}
\item to sample $p(\wv|\data)$, we carry out Gibbs sampling over $p(\wv_y|\data,\wv_{-y})$ alternately for $y=1,\dots,M$;
\item to sample each $p(\wv_y|\data,\wv_{-y})$, we use data augmentation to sample over $p(\wv_y,\gammav_y|\data,\wv_{-y})$.
\end{enumerate}
Accordingly, the EM algorithm for Crammer and Singer multi-class SVMs inherits this 2-layer structure:
\begin{enumerate}
\item to maximize $p(\wv|\data)$, we carry out blockwise coordinate descent to maximize $p(\wv_y|\data,\wv_{-y})$ alternately;
\item to maximize each $p(\wv_y|\data,\wv_{-y})$, we adopt the EM algorithm where
\end{enumerate}
{\small\begin{equation*}
    Q^{(m)}(\wv_y)=\int\log p(\wv_y,\gammav_y|\data,\wv_{-y})p(\gammav_y|\data,\wv_y^{(m)},\wv_{-y})d\gammav_y
\end{equation*}}%

\section{Parallel SVM}\label{section:parallelsvm}

\begin{figure}
\centering
\epsfig{file=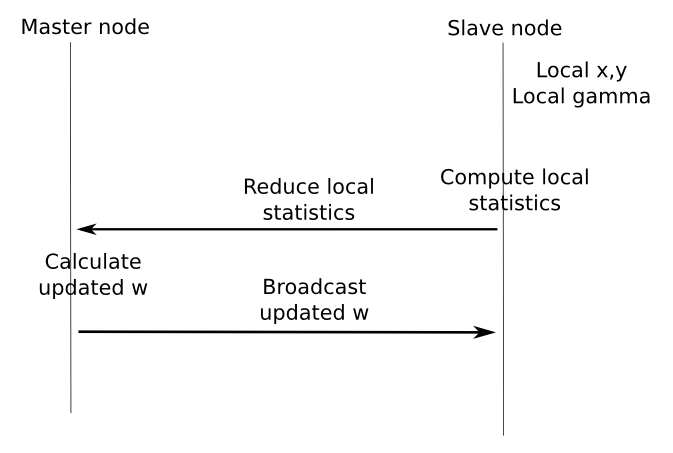, width=0.9\linewidth}
\caption{Map-reduce architecture for parallel sampling SVM}
\label{fig:mapreduce}
\end{figure}

Below we show how to employ distributed computing into the sampling algorithms above. We focus on the classical linear binary SVMs for the ease of explanation. And exactly the same techniques apply as well to all the extensions we present in section~\ref{section:extensions}, and also their EM algorithms.

Two key properties of the sampling process that are in favor of parallel computation are summarized as follows.
\begin{enumerate}
\item The scale variables $\gammav$ are mutually independent from each other, whose sampling step, therefore, can be easily parallelized to multiple cores and multiple machines.
\item The training data $(\xv_d,y_d)$ contribute to the global variables $\muv$ and $\Sigmav$ through a simple summation operator (Eq.~\eqref{mu_sigma}). Thus a typical map-reduce architecture is directly applicable, as shown in Figure \ref{fig:mapreduce}.
\end{enumerate}

\subsection{The Basic Procedure}

Let $P$ be the total number of processes and let $\data^p = \{(\xv_d^p, y_d^p)\}_{d=1}^{D_p}$ be the data assigned to process $p$. Then each process performs the following computations
\begin{enumerate}
    \item {\it draw scale parameters}: each $p$ draws $\gamma_{dp}^{-1}~(\forall 1 \leq d \leq D_p)$ according to the distribution in Eq. (\ref{eq:SVMgamma2}).
    \item {\it compute local statistics}: each $p$ computes the following local statistics
    \begin{eqnarray}
        ~~\muv^p &=& \sum_{d=1}^{D_p} (1 + \frac{1}{\gamma_{dp}})y^p_d \xv^p_d , \nonumber \\
            ~~\Sigmav^p &=& \sum_{d=1}^{D_p} \frac{1}{\gamma_{dp}} \xv^p_d \xv^{p\top}_d.
    \end{eqnarray}
\end{enumerate}
Since $\Sigmav^p$ is symmetric, it suffices to compute only the upper or lower triangle and then submit to the master.

After process $p$ has finished its local computation, it passes the local statistics $\muv^p$ and $\Sigmav^p$ to the master process, which collects the results and performs the following aggregation operations
\begin{enumerate}
    \item compute $\Sigmav^{-1} = \lambda I + \sum_p \Sigmav^p$.
    \item after $\Sigmav^{-1}$ is updated, compute $\muv = \Sigmav (\sum_p \muv^p)$.
\end{enumerate}

It is worth noting that all the slave processes perform exactly the same set of operations. Assume that we equally partition the large data set and all computing nodes are of the same capacity, then it can be expected that all the nodes have a high probability to finish their local job at roughly the same time. Therefore the latency due to synchronization is typically small. While in contrast, the existing parallel methods for SVMs by solving multiple smaller QP problems can suffer from large synchronization latency since the sub-QP problems varies a lot.

\subsection{Notation}

We will denote the parallel sampling SVM as PEMSVM.  PEMSVM has the following options:

\myitemize {
    \item linear (``LIN'') vs kernelized (``KRN'')
    \item EM (``EM'') vs MCMC (``MC'')
    \item binary classification (``CLS'') vs multiclass classification (``MLT'') vs support vector regression (``SVR'')
}

These three sets of options are orthogonal, so we can write a set of options for example as 'LIN-EM-CLS'.

$N$ is the number of training instances, $K$ is the number of features, $M$ is the number of classes, and $P$ is the number of processes.

\subsection{Iteration time}

We looked at the iteration time for different formulations, to give some indication of how well the implementation might scale with $N$, $K$, and $P$.

We found that all formulations are highly scalable in $P$.  The LIN formulation is very scalable in $N$, but finds datasets with high $K$ challenging. 
The calculations involve dense $K$ by $K$ matrices, even where the original data is sparse.  So, dense datasets will run relatively more quickly on our implementation than sparse ones, when comparing with other possible solvers.

By contrast, the KRN formulation is highly scalable in $K$, in fact, the iteration time is independent of $K$, but the iteration time is cubic with $N$,
which is a significant challenge.  To make the KRN formulation really effective, it might be useful to find some way to either reduce the number of
features, or use an approximation.  For example, PSVM approximates the $N$ by $N$ kernel matrix with an $N$ by $sqrt(N)$ matrix, and gets very good 
accuracy.  Maybe there is a way to do something similar with the sampling kernel SVM formulation?

As far as the Crammer and Singer solver, the scalability follows closely the scalability of the associated underlying solver, ie LIN, for the formulations we 
presented above.

Next we will present the reasons for the asymptotic iteration times we just talked about.

\subsubsection{EM}

\paragraph{LIN}

\begin{table}
\centering
\begin{tabular}{|l|l|}
\hline
Step & Asymptotic time \\
\hline
Draw $\boldsymbol \gamma$ & $O( NK / P )$ \\
Calculate $\boldsymbol \mu_p$ & $O( NK / P )$ \\
Calculate $\boldsymbol \Sigma_p$ & $O( N K^2 / P )$ \\
Reduce & $O(  K^2 \log(P) )$ \\
Draw $\boldsymbol \mu$ & $O( K^2 \log(K) )$ \\
Broadcast $\boldsymbol \mu$ & $O(  K^2 \log(P) )$ \\
\hline
\end{tabular}
\caption{Asymptotic times for LIN-EM-CLS.}
\label{tab:asymptoticlin}
\end{table}

Looking first at linear EM binary classification, LIN-EM-CLS comprises the steps shown in Table \ref{tab:asymptoticlin}.  Overall:
\begin{displaymath}
O( K^2[ N /P + \log(P) + \log(K) ] )
\end{displaymath}

Typically, the $N/P$ term dominates, giving $O(NK^2/P)$, and parallelization is effective.

Where $K$ or $P$ are high, then the $\log(P)$ and $\log(K)$ terms can dominate.  When this is the case,
(further) parallelization is no longer effective.

Therefore, parallelization is most effective for high $N$ and lower $K$.

\begin{table}
\centering
\begin{tabular}{|l|l|}
\hline
Step & Asymptotic time \\
\hline
Draw $\boldsymbol \gamma$ & $O( N^2 / P )$ \\
Calculate $\boldsymbol \mu_p$ & $O( N^2 / P )$ \\
Calculate $\boldsymbol \Sigma_p$ & $O( N^3 / P )$ \\
Reduce & $O(  N^2 \log(P) )$ \\
Draw $\boldsymbol \mu$ & $O( N^2 \log(N) )$ \\
Broadcast $\boldsymbol \mu$ & $O(  N^2 \log(P) )$ \\
\hline
\end{tabular}
\caption{Asymptotic times for KRN-EM-CLS.}
\label{tab:asymptotickrn}
\end{table}

\paragraph{KRN}
Next, turning to the kernel formulation for binary classification, KRN-EM-CLS comprises the steps in \tabref{asymptotickrn}.   Overall:
\begin{displaymath}
O( N^2[ N /P + \log(P) + \log(N) ] )
\end{displaymath}

Typically, the $N/P$ term dominates, giving $O(N^3/P)$, which shows effective parallelization.

When $P$ or $N$ are high, then the $\log(P)$ and $\log(N)$ terms can dominate, and (further) parallelization
is no longer effective.

Therefore, parallelization of the kernel formulation is most effective for high $K$ and low $N$.

\paragraph{SVR}

As far as SVR, the iteration time is asymptotically identical to CLS.  There is a constant factor of 2, but this is absorbed in asymptotic analysis.

\paragraph{MLT}

Looking at the Crammer and Singer solver formulation, the iteration time of MLT is multiplied by a factor of $M$, when compared to binary classification, CLS.

\subsubsection{MC}

The paragraphs above discussed the EM formulation.  In MC, there is an additional stochastic sampling step for both $\boldsymbol \gamma$ and
$\boldsymbol \Sigma$.  However, the asymptotic times of the sampling steps are no larger than other terms already considered, and the asymptotic
iteration time of MC is identical to that of EM.

\section{Experiments}\label{section:experiments}

We compare the parallel sampling SVM implementations with state of the art linear solvers.

\subsection{Summary of results}

We show that LIN-*-CLS is faster than state of the art linear solvers when $N$ is large relative to $K^2$, and there is parallel hardware available.

We show that LIN-*-MLT is highly scalable when $N$ is large relative to $K^2$, and there is parallel hardware available.

We show that the algorithms for KRN-*-CLS and LIN-*-SVR can give accuracy comparable to existing solvers.

We evaluate the performance of a GPU solver for LIN-EM-CLS. We show that the learning speed of our formulations can be accelerated by use of a GPU.

\subsection{Test conditions}

Tests were run on a cluster of 12-core nodes.  The cores were 2.6GHz; and each node had 24GB memory.

\subsection{Datasets}

\begin{table}
\centering
\begin{tabular}{|l|l|l|l|l|l|}
\hline
Name & N & K & M & Type & Source \\
\hline
alpha & 250,000 & 500 & 2 & CLS & Pascal LSL \\
dna & 25,000,000 & 800 & 2 & CLS & Pascal LSL\\
year & 250,000 & 90 & - & SVR & YearPredictionMSD \\
mnist8m & 4,000,000 & 798 & 10 & MLT & \\
\hline
\end{tabular}
\caption{Datasets}
\label{tab:datasets}
\end{table}

Table \ref{tab:datasets} shows the datasets used.

For some experiments, we created subsets.  A $K=K_0$ subset means that we include only features where $k <= K_0$.  Similarly a $N=N_0$ subset means that only the first $N_0$ data points from the original training dataset were included.

\subsection{Solvers}

\begin{table}
\centering
\begin{tabular}{|l|l|}
\hline
Name & Description \\
\hline
LL-Dual\cite{Fan:jmlr08} & Liblinear dual coordinate descent \\
 & L2-regularization L2-loss biased \\
LL-Primal\cite{Fan:jmlr08} & Liblinear primal coordinate descent \\
 & L2-regularization L2-loss biased \\
LL-CS\cite{Fan:jmlr08} & Liblinear Crammer and Singer \\
PSVM\cite{Chang:PSVM07} & PSVM, with rank\_ratio set to $1/\sqrt N$ \\
SVMPerf\cite{joachims2006training} & SVMPerf, with defaults \\
SVMMult\cite{joachims2009cutting} & SVMMulticlass, with defaults \\
Pegasos\cite{shalev2007pegasos} & Pegasos, with defaults \\
SDB\cite{chang2011selective} & Selective Block Minimization, biased \\
StreamSVM\cite{matsushima2012linear} & StreamSVM, with defaults \\
\hline
\end{tabular}
\caption{Solvers}
\label{tab:solvers}
\end{table}

Table \ref{tab:solvers} shows the solvers used, in addition to PEMSVM.

\subsection{Termination conditions}

PEMSVM calculates the value of the objective function at each iteration.  The algorithm terminates when the iterative change falls to $0.001 * N$ or below, which we found to be a reasonable stopping condition across many datasets.

We used the default termination conditions for other solvers.

\subsection{I/O}

By breaking the problem into independent sub-problems, not only can the calculations be
parallelized across multiple cores, but the I/O load of reading in the datafile into memory
can similarly be parallelized across cores, and across compute nodes.  This in itself can lead
to speed increases when compared to single-threaded algorithms.

In addition, even for large datasets, such as dna, it is possible to hold the dataset entirely in
memory, across multiple compute nodes.

\subsection{Implementation Details}

\subsubsection{MPI implementation}

MPI was used with C/C++ to parallelize the implementation over multiple CPU cores.  The
cores can be on a single node or multiple nodes.  Each MPI process was assigned
a partition of the dataset, read the data from the datafile itself, and coordinated with
a master process.

The MPI implementation was implemented using a sparse representation for $\xv_d$.

\subsubsection{GPU implementation}

OpenCL was used with C/C++ to parallelize the calculation of $\sum_d \frac{1}{\gamma_d}\xv_d \xv_d^T$ over multiple GPU cores.  The data was partitioned and each partition was loaded
into the local memory of a computer unit.  The results written to global memory, then
reduced using a second GPU kernel.

For multiple GPUs, the dataset was first partitioned, then each partition was handled by a single GPU, in parallel.

For datasets that did not fit into the GPU global memory, the dataset was first partitioned into
chunks that did, then each chunk was processed sequentially using the above algorithm.

The GPU implementation was implemented using a dense representation for $\xv_d$, for simplicity, though there is no technical reason that a sparse representation couldn't be used too.

\subsubsection{Treatment of singular $\gamma_d$ values}

For support vectors, the values of $\gamma_d$ will go to zero, or nearly zero.  Polson suggests using Greene's restricted least squares to separate support
vectors from non-SVs.  We found that
clamping the lambda values to be at least some small value $\epsilon$ gives similar results, and is simpler.

\subsubsection{Source-code}

Opensource code for the MPI and GPU implementations will be made available at http://ml.cs.tsinghua.edu.cn/{\mytilde}jun.

\subsection{MPI solver for linear classification}

\begin{table}
\centering
\begin{tabular}{|l|l|l|l|l|l|}
\hline
\multicolumn{5}{|l|}{ N=2,500,000 training subset:} \\
\hline
Solver   & P & C & Train & Acc. \% \\
\hline
Pegasos\cite{shalev2007pegasos}  & 1 & - & Crash & - \\
SDB\cite{chang2011selective} & 1 & 1 & Crash & - \\
StreamSVM\cite{matsushima2012linear} & 2 & 4e-5 & 6138s & 90.48 \\
SVMPerf\cite{joachims2006training} & 1 & 2 & 641.3 & 90.42 \\
LL-Primal\cite{Fan:jmlr08} & 1 & 4e-6 & 159.1s & 90.31 \\
LL-Dual\cite{Fan:jmlr08} & 1 & 4e-6 & 126.6s & 90.32 \\
LIN-EM-CLS & 48 & 1e-5 & 248.1s & 90.44 \\
LIN-EM-CLS & 480 & 1e-5 & \textbf{83.5s} & 90.45 \\
\hline
\multicolumn{5}{|l|}{ Full N=25,000,000 training set:} \\
\hline
Solver   & P & C & Train & Acc. \% \\
\hline
LL-Dual\cite{Fan:jmlr08} & 1 & 4e-6 & Crash & - \\
LL-Primal\cite{Fan:jmlr08} & 1 & 4e-6 & Crash & - \\
SVMPerf\cite{joachims2006training} & 1 & 2 & Crash & - \\
StreamSVM\cite{matsushima2012linear} & 2 & 4e-5 & > 30h & - \\
StreamSVM\cite{matsushima2012linear} & 2 & 1e-5 & 44h & 90.87 \\
LIN-EM-CLS & 48 & 1e-5 & 3327s & 90.81 \\
LIN-EM-CLS & 480 & 1e-5 & \textbf{533.8s} & 90.81 \\
\hline
\end{tabular}
\caption{Performance on dna dataset}
\label{tab:linemclsperfdna}
\end{table}

Table \ref{tab:linemclsperfdna} compares our LIN-EM-CLS implementation with other solvers for the dna dataset.

For a 2.5 million row subset, our solver was the fastest, when 48 cores were available. Other solvers tested were unavailable to take advantage of the extra cores.  Pegasos exceeded available memory (24GB + 30GB swap), and was killed.  SDB crashed for unknown reasons.

For the full 5 million row subset, our solver is one of the only two that managed to complete.  The 
other solvers exhausted available memory, and were killed.  StreamSVM makes good use of available memory, using a blocking procedure, but as a consequence, since it uses only two threads, and runs on a single network node, it is very slow.

Overall, our solver was the fastest on 2.5 million rows, for equivalent accuracy to other solvers, when 480 cores were available.  It was the only solver to complete training on the full 25 million rows within 24 hours, and was over three thousand times faster.  This might be explained partially because of the parallelization over multiple CPU cores, partly because the whole dataset can be loaded into distributed memory simultaneously, obviating any need for further I/O during training.

\subsection{Scalability of LIN-CLS}

\myfigure{Effect of number of cores on training speed, dna dataset}{speedversusnumbercoresdnabig}

Figure \myfigref{speedversusnumbercoresdnabig} shows the scalability with number of cores of LIN-EM-CLS, using the DNA dataset.  The speed is linear with the number of cores, as far as 480 cores, on this dataset.

\myfigure{Effect of $N$ on training time, alpha dataset}{effectnontrainingtimealpha}

Figure \myfigref{effectnontrainingtimealpha} shows the scalability with $N$. For this graph, all solvers were run single-threaded, including both LIN-CLS and PSVM. We can see that LIN-CLS is linear in $N$, and scales much better with $N$ than PSVM.  PSVM is a dual solver, and scales well with $K$, but less well with $N$.  Liblinear and Pegasos also scale linearly with $N$.  Note that LIN-EM-CLS is slower than Liblinear and Pegasos in a single-threaded scenario, but by taking advantage of additional cores, LIN-EM-CLS can be faster than both Liblinear and Pegasos.

\myfigure{Effect of $K$ on training time, alpha dataset}{effectktrainingtimealpha}

Figure \myfigref{effectktrainingtimealpha} shows the effect of $K$ on training time, again running each solver single-threaded.  LIN-CLS is quadratic in $K$.  It scales better with $K$ than PSVM on this dataset.  This dataset is quite harsh on PSVM, because it has a very high $N$.  Liblinear and Pegasos are both linear with $K$.

\subsection{SVR}

\begin{table}
\centering
\begin{tabular}{|l|l|l|l|l|}
\hline
Solver & Cores & C & Train & RMS error \\
\hline
LL-Primal\cite{Fan:jmlr08} & 1 & 1 & 15.0s & 0.88 \\
LL-Dual\cite{Fan:jmlr08} & 1  & 1 & 114.9s & 0.89 \\
LIN-EM-SVR & 48 & 0.01 & \textbf{2.5s} & 0.90 \\
\hline
\end{tabular}
\caption{SVR on year dataset}
\label{tab:svryear}
\end{table}

Table \ref{tab:svryear} shows the performance of LIN-EM-SVR versus liblinear for the year regression dataset.  The data was normalized for mean and variance prior to testing.  Epsilon was set to 0.3.

LIN-EM-SVR trained the fastest, for similar accuracy.

\subsection{KRN}

\begin{table}
\centering
\begin{tabular}{|l|l|l|l|l|}
\hline
Solver & Cores & C & Train & Acc. \% \\
\hline
LL-Dual\cite{Fan:jmlr08} & 1 & 1000 & 7.1s & 90.2 \\
LL-Primal\cite{Fan:jmlr08} & 1 & 1000 & 1.67s & 90.3 \\
KRN-EM-CLS & 48 & 1 & 27.2s & 90.1 \\
\hline
\end{tabular}
\caption{KRN on N=1800 subset of news20}
\label{tab:krnnews20}
\end{table}

Table \ref{tab:krnnews20} shows results for KRN-EM-CLS.  Our accuracies are similar to liblinear for this training set.

The kernel formulation allows the use of non-linear kernels, and the training time is independent of $K$.

A limitation of the KRN formulation is that the training time is cubic in $N$.  

\subsection{Performance on Crammer and Singer models}

\begin{table}
\centering
\begin{tabular}{|l|l|l|l|l|l|}
\hline
Solver & Cores & $C$ & Train & Acc. \% \\
\hline
\multicolumn{5}{|l|}{N=200,000 training subset:} \\
\hline
LL-CS\cite{Fan:jmlr08} & 1 & 0.2 & 74.0s & 87.9 \\
SVMMult\cite{joachims2009cutting} & 1 & 800000 & 518.9s & 87.0 \\
LIN-MC-MLT & 48 & 0.04 & 284.4s & 86.1 \\
LIN-MC-MLT & 480 & 0.04 & 76.7s & 85.8 \\
\hline
\multicolumn{5}{|l|}{Full N=4,000,000 training set:} \\
\hline
SVMMult\cite{joachims2009cutting} & 1 & 80000 & Crash & - \\
LL-CS\cite{Fan:jmlr08} & 1 & 0.2 & 223.0s & 88.4 \\
LIN-MC-MLT & 48 & 0.04 & 4950s & 86.1 \\
LIN-MC-MLT & 480 & 0.04 & 613.9s & 86.3 \\
\hline
\end{tabular}
\caption{Performance on mnist8m dataset}
\label{tab:linemmltperfmnist8m}
\end{table}

Table \ref{tab:linemmltperfmnist8m} shows the performance of Crammer and Singer classifiers on the mnist8m dataset.  Our implementation of Crammer and Singer is parallelizable across multiple cores.  On a cluster today it gives training times comparable to liblinear,
and much faster than SVMMulticlass.  In the future, the number of cores available will likely increase, possibly exponentially, and our implementation
might become increasingly advantageous, when compared to the single-threaded liblinear and SVMMulticlass.

For the full mnist8m dataset, only our solver and liblinear were able to complete training.  SVMMulticlass used up all available memory (24GB + 30GB swap), and was killed. 

Increasing the number of cores from 48 to 480 for our implementation gave a 7.6 times increase in speed, showing the scalability of this algorithm.

\subsection{Convergence}

\myfigure{Convergence of objective function, DNA, N=2.5 million subset, C=1e-5}{objectiveversusiterationsdna2m5}

\myfigure{Convergence of accuracy, DNA, N=2.5 million subset, C=1e-5}{accuracyversusiterationsdna2m5}

Figure \myfigref{objectiveversusiterationsdna2m5} shows the convergence of the objective function, for both
MC and EM, for the DNA dataset, for LIN-*-CLS.

The EM objective function converges within 40-60 iterations here, and this is what we saw in practice
across other datasets.

For MC, we have two choices:
\myitemize{
     \item use the best single sample
     \item average multiple samples
}

Given that this is high-dimensional space, taking single samples is unlikely to get close to the optimal solution, so we average across multiple samples.  Usually, one would want to select a small burnin period of 10-20 iterations.

This contrasts with EM, where we use a single sample at each iteration to measure the test accuracy.

So, the objective function for MC in these graphs converges more slowly than for EM.

In this graphs, we didn't use a burnin period for MC.  Using a burnin period of 10 iterations improves the convergence time.

Taking the average across all MC samples from $1$ to $i$ gives a relatively smooth change in the objective function over time, which is useful for making a convenient stopping heuristic, and also gives good test accuracy..

Note that whilst this particular dataset gives a monotonically decreasing objective function for MC, we noticed that in some cases the objective convergence curve does have multiple local minima, so one needs to be a bit careful as to how to construct an appropriate stopping heuristic.

Figure \myfigref{accuracyversusiterationsdna2m5} shows the accuracy, for the same experiment.

We can see that whilst EM converged faster in this case to a solution, after 100 iterations, the test accuracy for MC was higher.

In practice, we found that for LIN-*-CLS, EM gave good accuracies, and the stopping heuristics are simpler.

For the Crammer and Singer implementation, MC converged much faster than EM.

\subsection{Parallelization using GPU}

GPU kernels were written to evaluate the $\Sigmav$ component of the algorithm, ie $\sum_d \frac{1}{\gammav_d}\xv_d \xv_d^T$.  This is the rate-limiting
step for many datasets.  For LIN, the execution time is asymptotically $O(N K^2)$.

\begin{table}
\centering
\begin{tabular}{|l|l|l|}
\hline
Implementation & Time & Relative speed \\
\hline
1 CPU core & 17.1s & 1 \\
512 GPU cores & 0.73s & 23 \\
2048 GPU cores & \textbf{0.34s} & \textbf{50} \\
\hline
\end{tabular}
\caption{Using GPU to evaluate $\Sigmav$, for $N=250,000$, $K=500$}
\label{tab:sigmagpu}
\end{table}

Table \ref{tab:sigmagpu} shows the results for evaluating $\Sigmav$ for simulated $\xv_d$ and $\gammav_d$ vectors.
Using 512 GPU cores was 23 times faster than a single core.  Using 2048 GPU cores was about 50 times
faster.  The CPU core was from an Intel i7-3930K 3.2GHz CPU, and the GPU cores were from nVidia GTX590 GPUs (one GPU contains 512 cores).

\begin{table}
\centering
\begin{tabular}{|l|l|l|l|l|l|}
\hline
Solver & Hardware & Data & Learn & Acc. \% \\
 &  & load &  & \% \\
\hline
LL-Dual\cite{Fan:jmlr08}  & 1 CPU core & \multicolumn{2}{|c|}{ \hspace{12pt} 44.8s} & 78.16 \\
LIN-EM-CLS & 1 CPU core & 30.4s & 78.9s & 75.4 \\
LIN-EM-CLS & 2048 GPU cores & 29.2s & \textbf{6.1s} & 75.4 \\
\hline
\end{tabular}
\caption{GPU performance on alpha dataset, C=1}
\label{tab:gpualpha}
\end{table}

Table \ref{tab:gpualpha} shows the performance of LIN-EM-CLS using a GPU implementation on the alpha dataset.

We can see that using a single CPU core, liblinear is nearly 3 times faster than LIN-EM-CLS.  However, using GPU cores accelerated the learning time For LIN-EM-CLS by 13 times, relative to the single CPU core version.

Note that for this dataset, the data load time dominates the GPU version. This is the time to load the data from storage into PC main memory.  This is limited (i) by the speed of the storage medium and (ii) by the speed of parsing the ASCII data using a single CPU core.

One
advantage of the MPI implementation over the GPU version is that I/O is parallelized over multiple processors and multiple compute nodes.


\section{Conclusions}\label{section:conclusions}

We have presented a simple technique to solve SVM models on parallel hardware, using a
sampling SVM.  Our implementation of a parallel linear SVM solver is capable of handling very large
datasets, and scaled up to at least several hundred cores in our experiments.  We have provided
an extension to non-linear kernels, an implementation for support vector regression, and
a parallel solver for the Crammer and Singer model.  It is useful in its own right, and it 
is a useful addition to our armory, enabling fast and accurate solutions to composite maximum-margin models.

\bibliographystyle{abbrv}
\bibliography{psvm}

\end{document}